\documentclass[10pt,a4paper,twocolumn,twoside]{gc/article}
\usepackage[english]{babel}
\usepackage{siunitx}
\usepackage{array}
\usepackage{booktabs}
\lstdefinestyle{python}{
  language=Python,
  basicstyle=\footnotesize\ttfamily,
  keywordstyle=\bfseries,
  commentstyle=\itshape,
  stringstyle=\ttfamily,
  showstringspaces=false,
  breaklines=true,
  frame=none,
  numbers=none,
}
\usepackage{amsthm}

\usepackage{algorithm}
\usepackage{algpseudocode}
\usepackage{amsmath}
\usepackage{amssymb}

\theoremstyle{plain}
\newtheorem{theorem}{Theorem}
\newtheorem{lemma}{Lemma}

\theoremstyle{remark}

\theoremstyle{definition}

\journalname{}
\title{MXNorm: Reusing MXFP block scales for efficient tensor normalisation}

\author{Callum McLean{$^\star$}, Luke Y. Prince{$^\star$}, Alexandre Payot, Paul Balan\c{c}a, and Carlo Luschi}

\institution{Graphcore}

\footinfo{Correspondence: lukep@graphcore.ai}
\theday{\today}
\leadauthor{McLean et al.}

\begin{abstract}    
Matrix multiplication performance has long been the major bottleneck to scaling deep learning workloads, which has stimulated the design of new accelerators that use increasingly low-precision number formats. However, improvements in matrix multiplication performance have far outstripped improvements in performance on reductions and elementwise computations, which are still being performed in higher precision. In this work, we propose MXNorm, a drop-in replacement for RMSNorm that estimates the RMS using only the block scales calculated as part of the MXFP8 cast and enables a 32x decrease in the size of reduction needed for normalization. We validate our approximation method on pre-training of Llama 3 models of 125M, 1B and 8B parameters, finding minimal loss of training accuracy compared to a baseline using RMSNorm with MXFP8 matmuls. We also show practical kernel speedups using only \texttt{torch.compile} of up to 2.4x for MXNorm over RMSNorm, corresponding to a 1.3\% speedup in Llama 3 8B transformer layers in MXFP8 and a 2.6\% speedup in NVFP4.
\end{abstract}

\keywords{effiency, pretraining, quantization, mxfp, normalization, llm}

\begin{document}
		
    \maketitle 
    \thispagestyle{firststyle} 
    \gcabstract 
    

\section{Introduction}

The success of deep learning in natural language processing, computer vision, and scientific fields such as molecular biology has been enabled by leaps in AI accelerator capabilities. In particular, an 80x improvement in the GPU acceleration of low-precision matrix multiplications over the past eight years \cite{nvidia_blackwell_2024, scaling-book} (Table \ref{tab:gpu-comparison}) has allowed researchers and practitioners to scale pre-training of transformer-style neural networks and learn from vast pools of unlabelled data. However, as matrix multiplication becomes less of a bottleneck to throughput, other components of model architectures emerge as new bottlenecks. Indeed, other aspects of AI accelerators have not kept pace with improvements in matrix multiplication throughput. For example, elementwise operations and reductions are limited by memory bandwidth and CUDA core throughput in GPUs; however, these have only improved by factors of 8.9x and 5.1x, respectively, over the past 8 years. Furthermore, this gap is set to widen with upcoming GPU architecture releases \cite{nvidia_rubin_2026}. In some cases these operations can be hidden by overlapping them with matrix multiplications \cite{shah2024flashattention3fastaccurateattention}, but others require too much memory to pipeline in practice. As such, we assert that the community needs to consider new building blocks that contribute a smaller overhead.

\newcolumntype{Y}{>{\centering\arraybackslash}m{2.0em}}

\begin{table}[tbh]
\centering
\caption{Comparison of GPU performance and memory bandwidth. Improvement relative to V100 shown in square brackets. dtype: Lowest supported precision of floating point format for matrix multiplication.
*Anticipated specification of Rubin generation GPUs that have yet to be released commercially.
}
\resizebox{\linewidth}{!}{\begin{tabular}{
    ll
    S[table-format=5.0] Y
    S[table-format=3.1] Y
    S[table-format=4.0] Y
}
\toprule
\textbf{GPU} & \textbf{dtype}
& \multicolumn{2}{c}{\textbf{Tensor Core}} 
& \multicolumn{2}{c}{\textbf{CUDA Core}}
& \multicolumn{2}{c}{\textbf{Mem. BW}} \\
& & \multicolumn{2}{c}{(TFLOPS)} 
  & \multicolumn{2}{c}{(TFLOPS)}
  & \multicolumn{2}{c}{(TB/s)} \\
\midrule
V100  & FP16 & 125   &        & 15.7  &        & 0.9  &        \\
A100  & FP16 & 312   & [2.5x] & 19.5  & [1.2x] & 2.0 & [2.2x] \\
H100  & FP8  & 1979  & [15.8x] & 67.0   & [4.3x] & 3.4 & [3.7x] \\
GB200 & FP4  & 10000 & [80.0x] & 80.0   & [5.1x] & 8.0 & [8.9x] \\
Rubin* & FP4 & 35000 & [280x] & 130.0   & [8.3x] & 22.0 & [24.4x] \\ 
\bottomrule
\end{tabular}}
\label{tab:gpu-comparison}
\end{table}

One such building block ripe for consideration is normalization. Normalization layers are essential for ensuring pre-training stability. Various normalization schemes have been favoured over the past several years, such as BatchNorm \cite{ioffe2015batchnormalizationacceleratingdeep} for convolutional neural networks, LayerNorm \cite{ba2016layernormalization} for sequence models, and more recently RMSNorm \cite{zhang2019rootmeansquarelayer} for large language models such as the Llama series \cite{touvron2023llamaopenefficientfoundation, touvron2023llama2openfoundation, grattafiori2024llama3herdmodels}. In the case of RMSNorm, each token's hidden state is normalized using its root mean square. The placement of norms is also key to pre-training performance, with frontier models typically placing norms at the beginning of each residual branch (which we refer to as \emph{Pre-Norm} transformers following \cite{xiong2020layernormalizationtransformerarchitecture}). 

We propose a design for a drop-in replacement for RMSNorm which fuses normalization with conversion to microscaling (MX) formats \cite{rouhani2023microscalingdataformatsdeep}. MX formats are a family of recently proposed block-quantization formats used to accelerate matrix multiplications in the latest generation of AI accelerators by quantizing tensors to fewer than 16 bits per element while preserving the range of higher-precision formats such as BF16 and FP32 (see Appendix \ref{apx:mxfp-primer} and Section \ref{subsec:mxcast} for relevant background). Unlike standard low precision formats (e.g., FP8), this range-preserving property of MX formats can enable practitioners to use direct casting, thereby simplifying training and inference recipes and supporting wider adoption \cite{mishra2025recipespretrainingllmsmxfp8}.

In Pre-Norm transformers (e.g. Llama \cite{touvron2023llamaopenefficientfoundation, touvron2023llama2openfoundation, grattafiori2024llama3herdmodels}, gpt-oss \cite{openai2025gptoss120bgptoss20bmodel}, Deepseek-V3 \cite{deepseekai2025deepseekv3technicalreport}) RMSNorm is executed just before MX quantization. We observe that both MX quantization and RMSNorm gather statistics along the hidden dimension of the tensor to rescale elements.  We also note that when a probability distribution is scaled linearly, the expected absmax of the distribution is scaled accordingly. From these observations we propose approximating the RMS using the block scales calculated during MX quantization, thereby enabling us to fuse RMSNorm with MX quantization for activations and requiring only a single pass of statistics gathering over the whole tensor. We call this new scheme \textbf{MXNorm}, and demonstrate its effectiveness and stability in pre-training of language models of up to 8B parameters and that it provides practical speedups on commercially available hardware at inference time.

\begin{figure*}[ht]
    \centering
    \includegraphics[width=\linewidth]{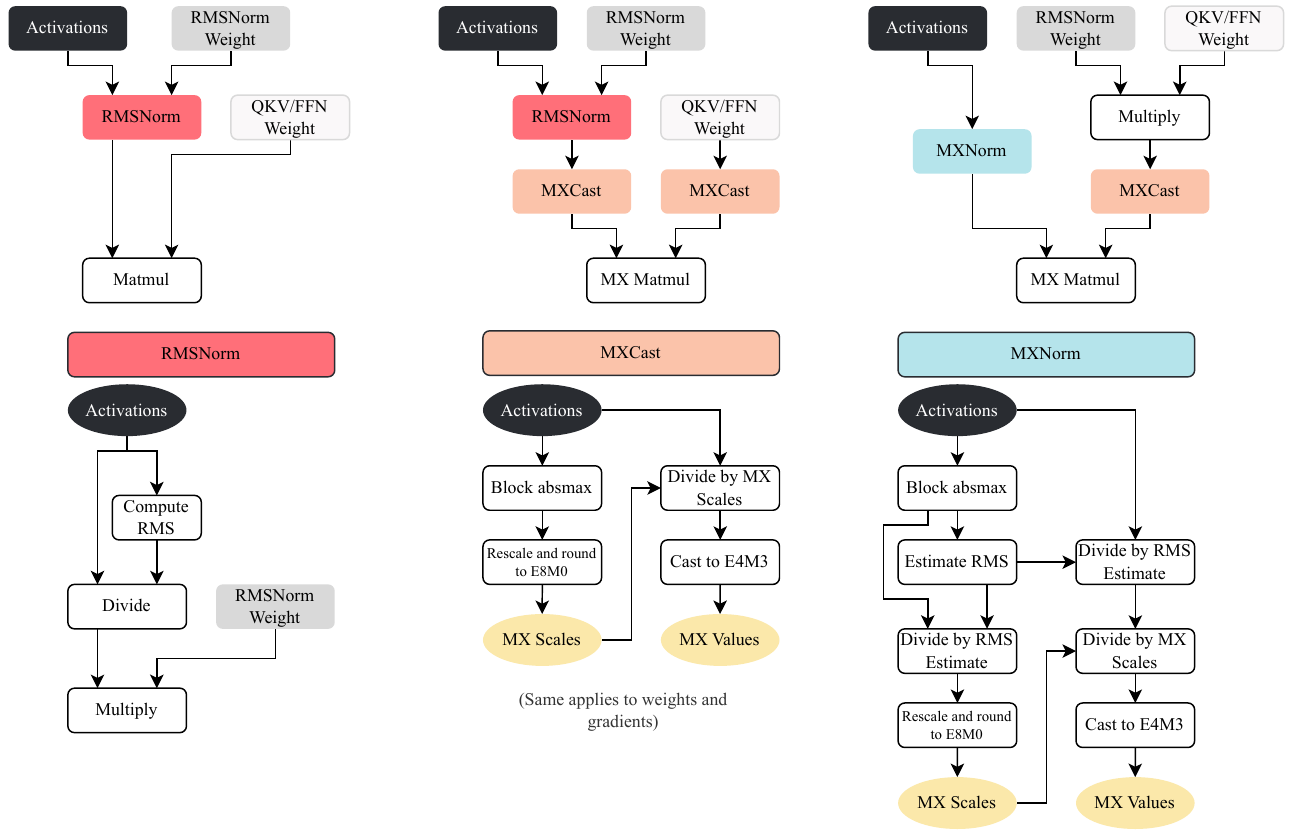}
    \caption{Computational graphs for RMSNorm, MXCast, and MXNorm in the context of Norm + Linear layer pattern. Top left: RMSNorm + Linear graph for high precision training. Top middle: RMSNorm + Linear graph with linear inputs cast to MX (MXLinear). Top right: RMSNorm approximated with MXNorm and RMSNorm weight fused with Linear weight. Bottom left: RMSNorm graph. Bottom middle: MXCast graph for MXFP8 activations. Note that MXCast is applied to weights (E4M3 values) and gradients (E4M3 or E5M2 values) as well. Bottom right: MXNorm graph.}
    \label{fig:comp-graphs}
\end{figure*}

\section{Related Work}

A few efforts have been made to train transformers without normalizing intermediate tensors. One approach has been to instead constrain weights to lie on a hypersphere, which incurs costs during optimizer steps \cite{loshchilov2025ngptnormalizedtransformerrepresentation, pethick2025trainingdeeplearningmodels}. More recently, \cite{zhu2025transformersnormalization} demonstrated the possibility of replacing normalization layers with a saturating non-linearity, at the cost of using slow elementwise special operations.

At time of writing, we are aware of two attempts to accelerate normalization, (1) by computing the RMS asynchronously and normalizing after multiplying the raw input by the weight \cite{graef2025flashnormfastnormalizationllms}, which runs the risk of swamping accumulators during matrix multiplication and harming output accuracy, and (2) by computing the RMS with the first $k$ vector elements \cite{zhang2019rootmeansquarelayer}, which has a high probability of missing outlier values that dominate the RMS.

\section{Methods} \label{sec:methods}

Here we will briefly describe RMSNorm, conversion to MXFP, and our main contribution: MXNorm. The differences between each of these schemes and their use in a Norm + Linear layer is summarized in Figure \ref{fig:comp-graphs}. We also provide a diff-like pseudocode comparison between RMSNorm followed by an MX cast in Algorithm \ref{algo:rmstomx} and MXNorm in Algorithm \ref{algo:mxnorm}.

\begin{figure*}[h]
\centering
{\fontsize{10}{12}\selectfont  
\begin{minipage}[t]{0.48\textwidth}
\begin{algorithm}[H]
\caption{RMSNorm + MXCast}
\begin{algorithmic}[1]
\Statex \hspace{-1.5em} \textbf{Input:} $X \in \mathbb{F}_{\star}[T \times K \times B]$
\Statex \hspace{-1.5em} \textbf{Output:} $S \in \mathbb{F}_{E8M0}[T \times K], \: V \in \mathbb{F}_{E4M3}[T \times K \times B]$ 
\Statex \hspace{-1.5em} 
\Statex \hspace{-1.5em} $\rho_t \gets \left( \frac{1}{KB} \sum_{kb} X_{tkb}^2 \right)^{-1/2}$ \hfill \textcolor{purple}{\# $\rho_t  = 1/RMS$}
\Statex \hspace{-1.5em} $\bar X_t \gets X_t \cdot \rho_t$
\Statex \hspace{-1.5em} $m_{tk} \gets \max_b|\bar X_{tkb}|$
\Statex \hspace{-1.5em} \textcolor{white}{$\tilde \rho_t \gets \left( \sum_{kb} X_{tkb}^p \right)^{-1/p}\cdot c$} 
\Statex \hspace{-1.5em} $S_{tk} \gets \text{cast}(m_{tk} / 256; E8M0)$
\Statex \hspace{-1.5em} $V_{tkb} \gets \text{cast}(\bar X_{tkb} / S_{tk}; E4M3)$
\end{algorithmic}
\label{algo:rmstomx}
\end{algorithm}
\end{minipage}%
\hfill
\begin{minipage}[t]{0.48\textwidth}
\begin{algorithm}[H]
\caption{MXNorm}
\begin{algorithmic}[1]
\Statex \hspace{-1.5em} \textbf{Input:} $X \in \mathbb{F}_{\star}[T \times K \times B]; c, p$
\Statex \hspace{-1.5em} \textbf{Output:} $\tilde{S} \in \mathbb{F}_{E8M0}[T \times K], \: \tilde{V} \in \mathbb{F}_{E4M3}[T \times K \times B]$
\Statex \hspace{-1.5em}
\Statex \hspace{-1.5em} \textcolor{white}{$\rho_t \gets \left( \sum_{kb} X_{tkb}^2 \right)^{-1/2}$} 
\Statex \hspace{-1.5em} \textcolor{white}{$\bar X_t \gets X_t \cdot \rho_t$} 
\Statex \hspace{-1.5em} $\tilde m_{tk} \gets \max_b|X_{tkb}|$
\Statex \hspace{-1.5em} $\tilde \rho_t \gets \left(\frac{1}{K} \sum_{k} \tilde m_{tk}^p \right)^{-1/p}\cdot c$ \hfill $\textcolor{purple}{\text{\# choose } c, p \text{ so } \tilde \rho_t \approx \rho_t}$
\Statex \hspace{-1.5em} $\tilde{S}_{tk} \gets \text{cast}(\tilde m_{tk} \cdot \tilde \rho_t/ 256; E8M0)$
\Statex \hspace{-1.5em} $\tilde{V}_{tkb} \gets \text{cast}(X_{tkb}\cdot \tilde \rho_t / S^{\tilde{X}}_{tk}; E4M3)$
\end{algorithmic}
\label{algo:mxnorm}
\end{algorithm}
\end{minipage}
}  
\caption*{Side by side diff-like comparison of RMSNorm + MXCast against MXNorm without norm gain parameters. Input $X$ with shape $[T \times K \times B]$ is received in blocked format where it originally had $D$ columns that have been partitioned in $K$ blocks of size $B$. We omit the use of a normalization stability constant $\epsilon$ for brevity. Block absmaxes are divided by 256 (the largest representable power of 2 in E4M3) to ensure maximal use of range. $p$ is a scalar norm exponent, used with p=1 for the arithmetic mean and p=2 for the RMS (quadratic mean) of absmaxes. $c$ is a scalar correction factor dependent on block size $B$ and $p$ to ensure $\tilde \rho \approx \rho$ when $X_t$ are i.i.d. samples from a Gaussian distribution. $\text{cast}(\cdot; EaMb)$ represents a cast to floating point format $\mathbb{F}_{EaMb}$ with $a$ exponent and $b$ mantissa bits. E8M0 casting uses an implementation defined power-of-2 rounding function $2^{\lfloor \log_2\cdot \rceil}$ \cite{or2025torchaopytorchnativetrainingtoservingmodel}. $\mathbb{F}_{\star}$ corresponds to a higher-precision floating point format.}
\end{figure*}

\subsection{RMSNorm} \label{sec:rmsnorm}

Given a tensor $X \in \mathbb{R}^{T \times D}$, RMSNorm normalizes each row of the tensor $X$ using the inverse RMS $\rho$, given by:
\begin{align}
\rho_t &= \left(\frac{1}{D}\sum_{d=1}^D X_{td}^2\right)^{-1/2} \label{eqn:rms}
\end{align}
The normalized input $\bar X_{td} = \rho_t X_{td}$ is rescaled by the learnable gain parameter $\gamma \in \mathbb{R}^D$ along the hidden dimension to give output $Y$ as follows:
\begin{align}
    Y_{td} &= \rho_t \cdot X_{td} \cdot \gamma_d \label{eqn:gain}
\end{align}
In Pre-Norm architectures \cite{xiong2020layernormalizationtransformerarchitecture}, there is an RMSNorm layer immediately prior to the QKV projection in each attention layer and immediately prior to the input and gate projections of the FFN (which can be fused into a single matrix multiplication). This leads to the ``RMSNorm $\rightarrow$ Linear" pattern shown in the top-left of Figure \ref{fig:comp-graphs}.

\subsection{Conversion to MXFP8 (MXCast)} \label{subsec:mxcast}

For conversion to MX we first partition the $D$ elements of each row of tensor $X$ into $K$ blocks of size $B$. We need to compute a per block 8-bit quantized scale $S \in \mathbb{F}^{T \times K}_{E8M0}$ that is used in turn to compute 8-bit quantized values $V \in \mathbb{F}^{T \times K \times B}_{E4M3}$, where $\mathbb{F}_{EaMb}$ denotes the set of values that can be represented in the floating point format $EaMb$.

To compute block scales $S_{tk}$ we first compute block absmaxes $m_{tk} = \max_b|Y_{tkb}|$, rescale by the largest power of 2 representable by $V$ to maximize usage of the range (e.g., 256 for E4M3), then round to a power of 2.

\begin{equation} \label{eqn:blockscale}
    S_{tk} = \text{cast}(m_{tk} / 256; E8M0)
\end{equation}

Power-of-two rounding is implementation-defined, with possible options given by (for example) \cite{rouhani2023microscalingdataformatsdeep, pytorch_ao_mx_config_2024, mishra2025recipespretrainingllmsmxfp8}. For our experiments we use the method defined in \cite{mishra2025recipespretrainingllmsmxfp8}, enumerated as \texttt{ScaleCalculationMode.RCEIL} in TorchAO \cite{or2025torchaopytorchnativetrainingtoservingmodel}.

We can then quantize block values by rescaling elements using block scales and casting to the target 8-bit floating point format.

\begin{equation} \label{eqn:blockvals}
    V_{tkb} = \text{cast}(Y_{tkb} / S_{tk}; E4M3).
\end{equation}

Where $\text{cast}(\cdot; EaMb)$ represents a cast to a (signed) floating-point format with $a$ exponent bits and $b$ mantissa bits. 

\subsection{Approximation of RMS using block absmaxes} \label{sec:rms_approx}

To motivate the design of MXNorm, we consider how we can approximate the inverse RMS as a function of the block absmaxes $\tilde \rho = f(m)$.

We first note that a generalized power mean of block absmaxes can be rescaled linearly to estimate the RMS. Intuitively, if a vector is scaled by a scalar factor $\sigma$ then both the RMS and the generalized power mean of the elements are scaled by $\sigma$, and so the ratio between them should be constant. Indeed, we can prove the following:

\begin{theorem} \label{thm:mean-absmax-almost-sure-rms-ratio}
Fix a block size $B$. Let $(X_i)_{i=1}^D$ be $D = KB$ i.i.d.\ samples from a scale family distribution such that $X = \sigma Z$, where $\sigma > 0$ and $Z$ satisfies $\mathbb{E}[Z^2] = 1$ and $\mathbb{P}(Z=0) = 0$. Partition the indices $\{1,\dots,D\}$ into $K$ disjoint blocks of size $B$, and define the block absolute maxima
\[
m_{k} \coloneqq \max_{1 \le b \le B} |X_{kb}|, \qquad k = 1,\dots,K.
\]
For $p > 0$, define the generalized $p$-mean of these block maxima by
\begin{equation}
G^{(p)}_K \coloneqq 
\left( \frac{1}{K} \sum_{k=1}^K m_{k}^p \right)^{1/p}.
\label{eq:GpK}
\end{equation}
Then, as $K \to \infty$,
\[
\frac{G^{(p)}_K}{\operatorname{RMS}(X)} \;\xrightarrow{\mathrm{a.s.}}\;
c^{(p,B)}
\]
where
\begin{equation}
c^{(p,B)} \coloneqq 
\left(
\mathbb{E}_k\!\left[
\max_{b} |Z_{kb}|^p
\right]
\right)^{1/p}
\label{eq:cpK}
\end{equation}
and $(Z_i)$ are i.i.d.\ copies of $Z$, provided that $\mathbb{E}_k\!\left[\max_{b} |Z_{kb}|^p \right] < \infty$.  
Thus the generalized $p$-mean of block absolute maxima converges to the RMS up to a multiplicative constant depending only on $p$, $B$, and the standardized distribution of $Z$.
\end{theorem}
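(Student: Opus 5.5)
The plan is to reduce the statement to two separate applications of the strong law of large numbers (SLLN), one for the numerator and one for the denominator, and then combine them with the continuous mapping theorem. Realise all variables on one probability space as a single infinite i.i.d.\ sequence $(Z_i)_{i\ge1}$ with $X_i=\sigma Z_i$. Block $k$ consists of indices $(k-1)B+1,\dots,kB$. Increasing $K$ then only appends new blocks and never changes existing ones, so ``$K\to\infty$'' is an ordinary limit along one sequence rather than a triangular array.

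\emph{Denominator.} Here $\operatorname{RMS}(X)^2=\frac{1}{KB}\sum_{i=1}^{KB}X_i^2$, and the $X_i^2$ are i.i.d.\ with $\mathbb{E}[X_i^2]=\sigma^2\mathbb{E}[Z^2]=\sigma^2<\infty$. By the SLLN, $\frac1n\sum_{i\le n}X_i^2\to\sigma^2$ almost surely as $n\to\infty$. Passing to the subsequence $n=KB$ preserves almost sure convergence. Continuity of the square root then gives $\operatorname{RMS}(X)\to\sigma>0$ almost surely. The hypothesis $\mathbb{P}(Z=0)=0$ guarantees that $\operatorname{RMS}(X)>0$ almost surely for every $K$, so the ratio in the statement is well defined. (This also holds eventually because the limit $\sigma$ is positive.)

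\emph{Numerator.} Write $m_k=\sigma M_k$ with $M_k:=\max_{b}|Z_{kb}|$. The blocks are disjoint and the $Z_i$ are i.i.d., so the $M_k^p$ are i.i.d.\ across $k$, being measurable functions of disjoint groups of independent variables. By hypothesis $\mathbb{E}[M_k^p]=(c^{(p,B)})^p<\infty$. The SLLN therefore gives $\frac1K\sum_{k=1}^K m_k^p=\sigma^p\frac1K\sum_k M_k^p\to\sigma^p (c^{(p,B)})^p$ almost surely. Applying the continuous map $x\mapsto x^{1/p}$ on $[0,\infty)$ yields $G^{(p)}_K\to\sigma\,c^{(p,B)}$ almost surely.

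\emph{Combination.} Intersect the two probability-one events. On that event, the quotient of the two limits is defined because the denominator limit satisfies $\sigma>0$. Hence $G^{(p)}_K/\operatorname{RMS}(X)\to\sigma c^{(p,B)}/\sigma=c^{(p,B)}$ almost surely. The scale $\sigma$ cancels, which is exactly why the constant depends only on $p$, $B$ and the law of $Z$.

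No step is deep. The main point needing care is setting up the coupling so that almost sure convergence in $K$ makes sense. One must check that the i.i.d.\ structure across blocks justifies the SLLN for the $M_k^p$, and that the SLLN for the denominator transfers along the subsequence $D=KB$. The integrability of $M_k^p$ is assumed in the statement, so it needs no separate verification.
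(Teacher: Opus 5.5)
Your proposal is correct and follows the same route as the paper's proof. Both use the SLLN plus continuous mapping to get $\operatorname{RMS}(X)\to\sigma$, then homogeneity $m_k=\sigma M_k$ with the SLLN on the i.i.d.\ $M_k^p$ to get $G^{(p)}_K\to\sigma c^{(p,B)}$, and finally take the quotient. Your explicit coupling of all $K$ on one i.i.d.\ sequence, and your observation that the quotient step needs only the limit $\sigma>0$, make your version slightly more careful than the paper's, but it is the same argument.
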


\begin{proof}

Observe that
\[
\operatorname{RMS}(X)^2
= \frac{1}{D}\sum_{d=1}^D X_i^2
\xrightarrow{\mathrm{a.s.}} \mathbb{E}[X^2]
\]
by the strong law of large numbers, and that
\[
\mathbb{E}[X^2]
= \sigma^2 \mathbb{E}[Z^2] = \sigma^2
\]

Hence $\operatorname{RMS}(X) \xrightarrow{\mathrm{a.s.}} \sigma $ by the continuous-mapping theorem.

For each block, homogeneity (meaning $f(a\cdot U) = a\cdot f(U)$) of the absmax yields
\[
m_{k} = \max_{b} |\sigma Z_{kb}|
= \sigma \max_{b} |Z_{kb}|.
\]
Define
\[
\mu_{k} \coloneqq \max_{b} |Z_{kb}|.
\]
The variables $\mu_{k}$ are i.i.d.\ and depend only on $B$ and the distribution of $Z$.

Substituting $m_{k} = \sigma \mu_k$ into the definition of $G^{(p)}_{k}$ in \eqref{eq:GpK},
\[
G^{(p)}_{K}
= \sigma \left( \frac{1}{K} \sum_{k=1}^K \mu_{k}^p \right)^{1/p}.
\]

$\mathbb{E}_k[\mu_{k}^p] < \infty$, so by the strong law of large numbers we have
\[
\frac{1}{K} \sum_{k=1}^K \mu_{k}^p
\;\xrightarrow{\mathrm{a.s.}}\;
\mathbb{E}_k[\mu_{k}^p]
=
\mathbb{E}_k\!\left[
\max_{b} |Z_{kb}|^p
\right].
\]
Applying the continuous-mapping theorem with $f(x) = x^{1/p}$ and using \eqref{eq:cpK} gives
\[
G^{(p)}_{K}
\;\xrightarrow{\mathrm{a.s.}}\;
\sigma\, c^{(p,B)}.
\]

$\mathbb{P}(Z = 0) = 0$, so $\mathbb{P}(\operatorname{RMS}(X) > 0) = 1$ and so applying the continuous-mapping theorem with $f(a, b) = a/b$ (which is continuous on $C =\mathbb{R} \times \mathbb{R}^+$) gives the desired result.

\end{proof}

We state the continuous-mapping theorem and discuss the premise that $\mathbb{P}(Z=0) = 0$ in Appendix \ref{apx:proof-mean-absmax-almost-sure-rms-ratio}.

\subsection{MXNorm}

From the insight in Section \ref{sec:rms_approx}, we can build a normalization function using the generalized $p$-mean over block absmaxes to replace $\rho$ with $\tilde \rho$ in Equation \ref{eqn:rms} for an activation tensor  $X \in \mathbb{R}^{T\times K\times B}$ reshaped into blocks,

\begin{align}
    \tilde m_{tk} &= \max_b X_{tkb} \\
    \tilde \rho_t &= \tilde c^{(p, B)} \cdot \left(\frac{1}{K}\sum_{k=1}^K \tilde m_{tk}^p\right)^{-1/p} + \epsilon
\end{align}

where $\tilde c^{(p, B)}$ is an estimate of $c^{(p, B)}$ calculated using Monte Carlo samples from a Gaussian distribution and the addition of $\epsilon << 1$ prevents division by zero.

We can then modify Equations \ref{eqn:blockscale} and \ref{eqn:blockvals} to incorporate our approximated inverse RMS $\tilde \rho$ for fusing quantization and normalization of $X$ to $(\tilde{S}, \tilde{V})$,

\begin{align}
    \tilde S_{tk} &= \text{cast}(\tilde m_{tk} \cdot \tilde \rho_t / 256; E8M0) \\
    \tilde V_{tkb} &= \text{cast}(X_{tkb} \cdot \tilde \rho_t/ \tilde S_{tk}; E4M3).
\end{align}

In this paper we experiment with MXNorm variants using generalized $p$-mean for p=1 and p=2. As an ablation of our design, we also consider a scheme in which we estimate the RMS using power-of-2 rounded block absmaxes (i.e. the E8M0 scale factors). The details and results of this ablation can be found in Appendix \ref{apx:post-round}.

\subsection{Bounds on output of RMSNorm and MXNorm}

The most well-known property of RMSNorm is that it normalizes a tensor to have an RMS of 1. An often-overlooked property is that the maximum value of the output of RMSNorm is bounded. This was also observed by \cite{zhu2025transformersnormalization} where they demonstrate that RMSNorm can be replaced by a saturating non-linearity like $\tanh$.

Here we compare the bounds of RMSNorm and MXNorm and show how they are related.

\begin{lemma}[Upper bound for RMS normalization]
For any nonzero $x \in \mathbb{R}^N$, the RMS-normalized vector
\[
\hat{x} = \frac{x}{\operatorname{RMS}(x)}
\]
satisfies
\[
\|\hat{x}\|_\infty \le \sqrt{N}.
\]
\end{lemma}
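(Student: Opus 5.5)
The plan is to reduce the bound to the elementary fact that a single squared coordinate cannot exceed the sum of all squared coordinates. Write $\operatorname{RMS}(x) = \bigl(\frac{1}{N}\sum_{i=1}^N x_i^2\bigr)^{1/2}$, which is strictly positive because $x \neq 0$. Hence $\hat{x}$ is well defined, and for each coordinate $j$ we have
\[
|\hat{x}_j| = \frac{|x_j|}{\operatorname{RMS}(x)} = \frac{\sqrt{N}\,|x_j|}{\|x\|_2}.
\]
So the claim $\|\hat{x}\|_\infty \le \sqrt{N}$ is equivalent to $|x_j| \le \|x\|_2$ for every $j$, that is, $\|x\|_\infty \le \|x\|_2$.

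The key step is to establish that inequality. Since every term in $\sum_i x_i^2$ is nonnegative, $x_j^2 \le \sum_i x_i^2 = \|x\|_2^2$. Taking square roots gives $|x_j| \le \|x\|_2$. Substituting this into the display above and taking the maximum over $j$ yields $\|\hat{x}\|_\infty \le \sqrt{N}$.

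I would also note that the bound is tight. Equality holds exactly when $x$ has a single nonzero coordinate, for example $x = e_1$, where $\operatorname{RMS}(x) = 1/\sqrt{N}$ and $\hat{x}_1 = \sqrt{N}$. I do not expect any step to be a real obstacle; the only care needed is the nonzero hypothesis, which guarantees $\operatorname{RMS}(x) > 0$ so the division is valid. This tightness remark is what I would carry forward when comparing with the corresponding MXNorm bound.
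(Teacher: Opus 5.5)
Your proof is correct and follows essentially the same route as the paper: both use the bound $x_j^2 \le \sum_i x_i^2$ to get $\max_i |x_i| \le \sqrt{N}\,\operatorname{RMS}(x)$, then divide by $\operatorname{RMS}(x) > 0$. Your tightness remark, that equality holds exactly when $x$ has a single nonzero coordinate, is also correct. The paper makes this point only informally, when it discusses outlier features approaching the worst case.
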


\begin{proof}
Using $\sum_{i=1}^N x_i^2 \geq \max_i x_i^2$ we can obtain
\[
\operatorname{RMS}(x)
= \sqrt{\frac{1}{N} \sum_{i=1}^N x_i^2 }
\ge \frac{1}{\sqrt{N}} \max_i |x_i|,
\]
we have
$\max_i |x_i| \le \sqrt{N}\,\operatorname{RMS}(x)$.
Dividing by $\operatorname{RMS}(x)$ yields
$\|\hat{x}\|_\infty \le \sqrt{N}$.
\end{proof}




A similar argument can be made with MXNorm using a generalized $p$-mean over the block absmaxes. 

\begin{lemma}[Upper bound for MXNorm normalization]
Let $x \in \mathbb{R}^N$ be partitioned into $K$ blocks of size $B$ and let $G^{(p)}(x)$ be defined as in~\eqref{eq:GpK}.
Then the MXNorm-normalized vector
\[
\tilde{x} = \frac{x}{G^{(p)}(x)}
\]
satisfies
\[
\|\tilde{x}\|_\infty \le \frac{K^{1/p}}{c^{(p,B)}}.
\]
\end{lemma}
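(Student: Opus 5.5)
The plan mirrors the argument for the RMSNorm lemma: bound the normaliser $G^{(p)}(x)$ from below by the largest entry of $x$, then divide. The bound involves no randomness, so Theorem~\ref{thm:mean-absmax-almost-sure-rms-ratio} is not needed. The constant $c^{(p,B)}$ only enters through how the correction factor is attached to the normaliser.

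\textbf{Step 1: lower bound on $G^{(p)}(x)$.} Let $k^\star$ be the block containing the entry of largest magnitude, so $m_{k^\star} = \|x\|_\infty$. Since $p>0$, every term $m_k^p$ is nonnegative. Dropping all terms except $k^\star$ gives
\[
\frac{1}{K}\sum_{k=1}^K m_k^p \;\ge\; \frac{1}{K}\, m_{k^\star}^p = \frac{1}{K}\|x\|_\infty^p .
\]
Because $t\mapsto t^{1/p}$ is increasing on $[0,\infty)$, this yields
\[
G^{(p)}(x) \ge K^{-1/p}\|x\|_\infty .
\]
This is the analogue of $\operatorname{RMS}(x)\ge N^{-1/2}\max_i|x_i|$.

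\textbf{Step 2: divide.} For $x\neq 0$ we have $G^{(p)}(x)>0$, and dividing gives $\|x/G^{(p)}(x)\|_\infty\le K^{1/p}$. The bound is attained when $x$ has a single nonzero entry, since then only one block maximum is nonzero and $G^{(p)}(x)=K^{-1/p}\|x\|_\infty$.

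\textbf{Step 3: the constant $c^{(p,B)}$ (main obstacle).} This is where I expect difficulty, because the stated bound does not match a literal reading of $\tilde x = x/G^{(p)}(x)$.
\begin{itemize}
\item The single-spike example in Step 2 shows the bound $K^{1/p}$ is sharp for $x/G^{(p)}(x)$. So the stated $K^{1/p}/c^{(p,B)}$ cannot hold under that reading whenever $c^{(p,B)}>1$, which is the typical case for $B=32$.
\item In MXNorm the RMS estimate is $G^{(p)}(x)/c^{(p,B)}$, by Theorem~\ref{thm:mean-absmax-almost-sure-rms-ratio}, equivalently $\tilde\rho = c^{(p,B)}/G^{(p)}$. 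Under that convention the normalised output is $c^{(p,B)}\,x/G^{(p)}(x)$, and Step 2 gives the sharp bound $c^{(p,B)}K^{1/p}$.
\item The bound $K^{1/p}/c^{(p,B)}$ would instead follow only if the normaliser were $c^{(p,B)}G^{(p)}(x)$.
\end{itemize}
I would therefore fix which scaling convention is meant before writing the final proof. Whatever the convention, the argument is Steps 1--2 followed by multiplication by the fixed positive constant attached to $G^{(p)}$.

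\textbf{Step 4: optional comparison with RMSNorm.} For Gaussian-like data, $c^{(p,B)}K^{1/p}$ can be compared with $\sqrt{N}=\sqrt{KB}$. For $p=2$ the ratio is $c^{(2,B)}/\sqrt{B}$. Using $\max_b|Z_b|^2\le\sum_b Z_b^2$ gives $c^{(2,B)}\le\sqrt{B}$, so this ratio is at most $1$ and the MXNorm bound is no worse than the RMSNorm bound.
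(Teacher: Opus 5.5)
Your Steps 1--2 are exactly the paper's argument: keep only the largest block maximum in the $p$-mean, take the $1/p$-th power, then divide. Your Step 3 diagnosis is also correct, because the paper's proof simply writes $G^{(p)}(x) = c^{(p,B)}\bigl(\frac{1}{K}\sum_k A_k^p\bigr)^{1/p}$ — attaching the constant to the normaliser as in your third bullet, contrary to \eqref{eq:GpK} — with $c$ effectively being the implementation's $\mathrm{RMS}/G$ factor (e.g.\ $0.4688$, the reciprocal of \eqref{eq:cpK}), so the paper's $K^{1/p}/c$ equals your second-bullet bound $c^{(p,B)}K^{1/p}$ (e.g.\ $\sqrt{64}/0.4688 = 17.06$ in the activation-statistics discussion).
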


\begin{proof}
Let $A = (A_1,\dots,A_K)$ be the vector of block absmaxes.
Then

\[
G^{(p)}(x) 
= c^{(p,B)} \cdot \left(\frac{1}{K} \sum_{k=1}^K A_k^p \right)^{1/p} 
\geq \frac{c^{(p,B)}}{K^{1/p}} \max_i |x_i|
\]

since $\max_b A_b = \max_i |x_i|$. As such we have $\max_i |x_i| \leq \frac{K^{1/p}G^{(p)}(x) }{c^{(p,B)}}$. Dividing by $G^{(p)}(x)$ yields $\|\tilde{x}\|_\infty \leq \frac{K^{1/p}}{c^{(p,B)}}$
\end{proof}

\subsection{Approximation quality of MXNorm}

We demonstrate the fidelity of our approximation by comparing the distributions of scales and value tensors produced by MXNorm against those produced by RMSNorm followed by MXCast. Inputs are generated from centred Gaussian distributions with different standard deviations. We provide the scale and value tensor distributions of an unnormalized input as a reference. The left two panels of Figure \ref{fig:rms-approx} shows that the distributions of scales and values are almost identical. We also demonstrate that the approximation quality of dequantized MX tensors as measured by $r^2$ improves asymptotically towards 1 as the number of blocks increases (see Figure \ref{fig:rms-approx}, right), with excellent approximation quality for as few as 1024 elements (32 blocks of 32).

\begin{figure}[htbp]
    \centering
    \includegraphics[width=\linewidth]{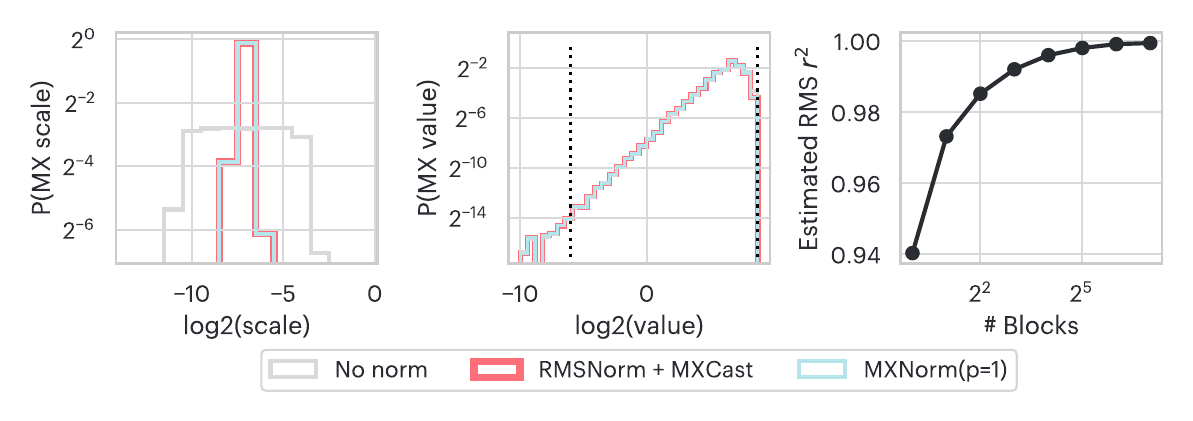}
    \caption{MXNorm as an approximation of RMSNorm. Left: MX scale distribution of normalized tensors. Middle: MX value distribution of normalized tensors. Right: MXNorm $r^2$ goodness-of-fit approaches 1 with more blocks.}
    \label{fig:rms-approx}
\end{figure}

\subsection{MXNormLinear} \label{sec:mxnormlinear}

Up until this point, we have yet to define how to parameterize MXNorm with an affine gain parameter $\gamma$ as is commonly done with RMSNorm and other normalization layers. This poses a difficulty for MXNorm in that it is not trivial to define a broadcasted elementwise multiplication on a tensor represented in an MX format (that is, as a tuple of block scales and values).

However, we can exploit the associativity of linear operations and the fact that normalization layers are typically followed by linear layers, and therefore apply the norm gains to the following weight matrices instead. As such we can define the quantized matrix multiplication to compute output $H$ at the core of a MXNormLinear layer as

\begin{align} \label{mx-matmul}
    H &= \texttt{MXNorm}(X) \: \texttt{MXCast}(W \gamma)^{\top}
\end{align}

where $H$ is accumulated in higher precision.

To train models using MXNorm, we must also be able to compute its gradients. To ensure smoothness of gradients we re-use the gradient calculation of RMSNorm as a straight-through estimator. We provide a full definition of our straight-through estimator in Appendix \ref{apx:rmsnorm-bwd}. 

A standard autograd package of RMSNorm used in conjunction with MX quantization would calculate weight gradients as

\begin{equation}
    \nabla W = \texttt{MXCast}(\nabla H^ \top)\:\texttt{MXCast}(\bar X \gamma)
\end{equation}

In this case we must requantize $\bar X \gamma$ as it will be block-quantized along rows in the forward pass, but must be block-quantized along columns for the backward pass. Similarly, we cannot use the output of MXNorm as it will be block-quantized along rows.

Instead, we cache the input $X$ and the inverse RMS estimate $\tilde \rho$ to compute a high precision $\tilde X = \tilde\rho X$ when needed. As such the new gradient calculation for a linear layer using MXNorm will be

\begin{equation}
    \nabla W = \texttt{MXCast}(\nabla H^ \top)\: \texttt{MXCast}(\tilde \rho \cdot X)\cdot\gamma
\end{equation}

Note that we move the application of the norm gain parameter $\gamma$ outside quantization, as would be performed by standard autograd packages for Equation \ref{mx-matmul}.

Since RMSNorm used in conjunction with MXCast would require caching a high-precision input to a matmul, our approach incurs no further memory overheads. Furthermore, in cases where weight matrices are larger than activation tensors, we can amortize any overhead caused by moving the multiplication of norm gains by (i) performing multiplication on sharded weights during training, (ii) performing the multiplication before gradient accumulation, and/or (iii) combining the trained norm gain and weight matrix parameters for inference.

 We provide a PyTorch implementation of the MXNormLinear forward and backward pass in Appendix \ref{apx:impl}.

\section{Experiments and Results} \label{sec:experiments-and-results}

\subsection{Pre-training} \label{subsec:pretraining}
\begin{figure}[htbp]
    \centering
    \includegraphics[width=\linewidth]{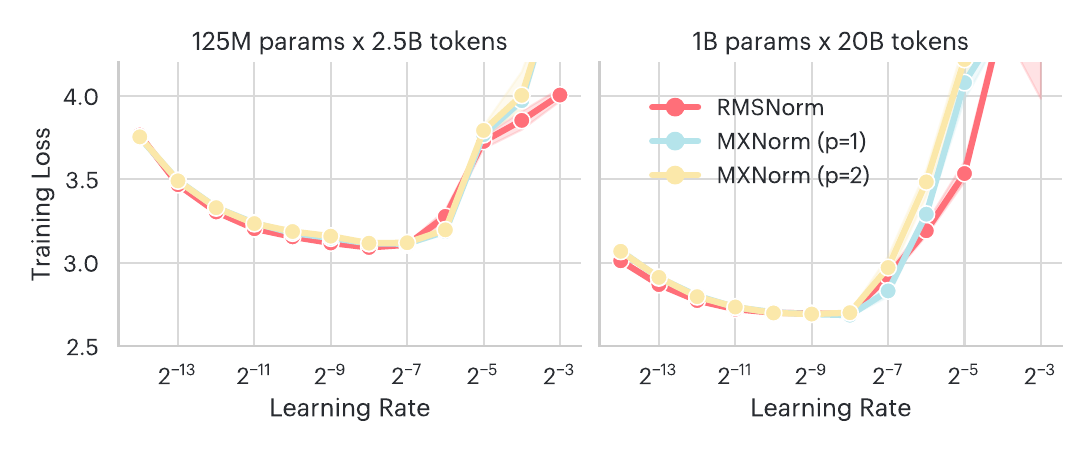}
    \caption{Learning rate sensitivity of MXNorm compared to RMSNorm. Left: 125M parameter model (depth=8, width=1024). Right: 1B parameter model (depth=16, width=2048).}
    \label{fig:lr-sweep}
\end{figure}

We validate MXNorm on pre-training of Llama 3 models \cite{grattafiori2024llama3herdmodels} of different sizes on the SlimPajama dataset \cite{cerebras2023slimpajama}, comparing against a baseline of RMSNorm followed by MXLinear layers. For all experiments we use the TorchTitan distributed pre-training library \cite{liang2025torchtitanonestoppytorchnative} with FSDP in conjunction with TorchAO \cite{or2025torchaopytorchnativetrainingtoservingmodel} for MX quantization. For learning rate sweeps we increment in powers of 2 from $2^{-14}$ to $2^{-3}$. In all other cases we reuse the default hyperparameters used for Llama 3 pre-training in TorchTitan using FSDP and BFloat16. See Appendix \ref{apx:experiment_details} for more details of the hyperparameters used for our experiments.

We examine pre-training stability at small scale by running a learning rate sweep on 125M parameter and 1B parameter models. The effect of quantization is often felt on training stability and can be seen at smaller scales by examining learning rate sensitivity \cite{wortsman2023smallscaleproxieslargescaletransformer}.

In Figure \ref{fig:lr-sweep}, we demonstrate that around the optimal learning rate there is little difference in the training loss between RMSNorm (125M: 3.090±0.004, 1B: 2.692±0.011) for MXNorm using either p=1 (125M: 3.113±0.012, 1B: 2.684±0.009) or the p=2 scheme (125M: 3.116±0.010, 1B: 2.691±0.007). For larger learning rates, MXNorm degrades more quickly than RMSNorm but not catastrophically.  This indicates that MXNorm is broadly stable regardless of power mean at small scale. We show loss curves for the optimum learning rate of 1B models in Appendix \ref{apx:convergence}.

\begin{figure}[htbp]
    \centering
    \includegraphics[width=\linewidth]{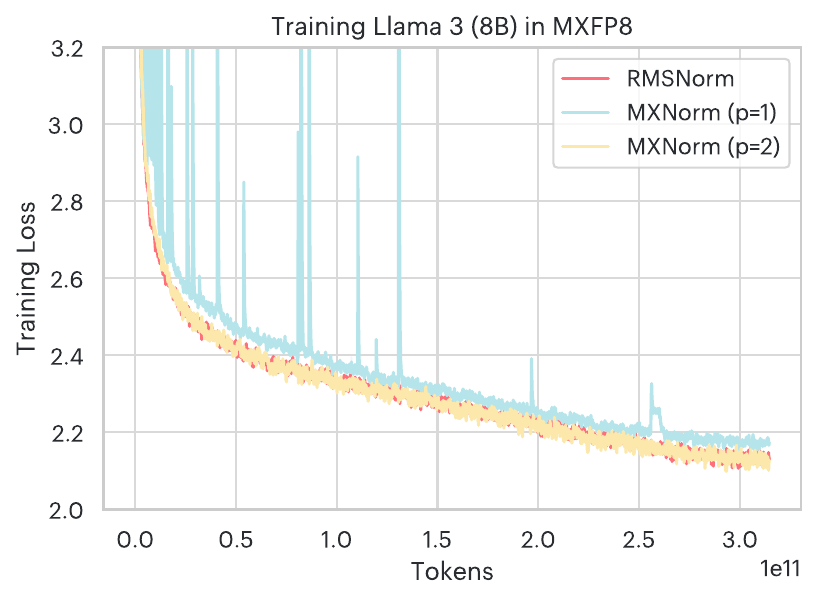}
        \caption{Training loss convergence of 8B parameter models trained on 300B tokens with MXNorm and RMSNorm}
    \label{fig:training-loss-8b}
\end{figure}

We test each scheme on 8B parameter models trained on 300B tokens and show loss curves in Figure \ref{fig:training-loss-8b}. Despite little difference at smaller scale, here we demonstrate that MXNorm(p=1) attains a significantly worse final loss (2.175) compared to RMSNorm (2.132). In contrast, MXNorm(p=2) matches the training loss of RMSNorm  (2.126). Comparable training loss also translates to comparable zero-shot performance on OLMES natural language processing tasks \cite{gu2025olmesstandardlanguagemodel}, with MXNorm(p=2) and RMSNorm both leading in 5/10 benchmarks (Table \ref{tab:olmes-eval}).

\begin{table*}[htbp]
\centering
\caption{Zero-shot performance on OLMES natural language tasks for 8B parameter models}
\begin{tabular}{lrrrrrrrrrr}
\toprule
 Norm Type     &   ARC-C        &   ARC-E       &   BoolQ       &   CSQA        &   HellaSwag &   MMLU &   OBQA &   PIQA &   SIQA &   WinoGr \\
\midrule
 RMSNorm       &  45.3          & \textbf{75.9} & \textbf{73.2} & \textbf{68.5} &        70.4 &   37.9 &          50.4 &   74.8 &        \textbf{56.5} &     \textbf{66.8} \\
 MXNorm (p=1) &  39.1          &       73.7 &    56.6 &            62.4 &        68.1 &   37.2 &          49.8 &   74.5 &        52.5 &     64.4 \\
 MXNorm (p=2)  &  \textbf{45.8} &       74.1 &    72.6 &            67.1 &        \textbf{71.5} &   \textbf{38.0} &          \textbf{54.2} &   \textbf{76.2} &        55.5 &     65.2 \\
\bottomrule
\end{tabular}
\label{tab:olmes-eval}
\end{table*}

\subsection{Activation statistics and loss spike analysis}

To understand why MXNorm(p=1) failed and MXNorm(p=2) succeeded in scaling to 8B parameters we first examine the events leading up to the first spike in MXNorm(p=1). Before the spike, the loss curves are largely the same. Afterwards, the loss for MXNorm(p=1) lags behind that of RMSNorm, suggesting that optimization during a critical early phase of training was missed during the events of the spike. We point to the sudden appearance of outlier features as a proximal cause of the loss spike, as demonstrated in Figure \ref{fig:outlier-features}. In this example, the value across all tokens of a single hidden feature explodes in the step where a loss spike starts. Outlier features are a well-known phenomenon in LLM pre-training \cite{dettmers2022llmint88bitmatrixmultiplication, sun2024massiveactivationslargelanguage} and are typically mitigated by stabilization mechanisms such as tensor normalization \cite{henry2020querykeynormalizationtransformers}, score capping \cite{gemmateam2024gemma2improvingopen}, regularization \cite{chowdhery2022palmscalinglanguagemodeling}, or attention sinks \cite{sun2024massiveactivationslargelanguage, xiao2024efficientstreaminglanguagemodels}.

\begin{figure}[htbp]
    \centering
    \includegraphics[width=\linewidth]{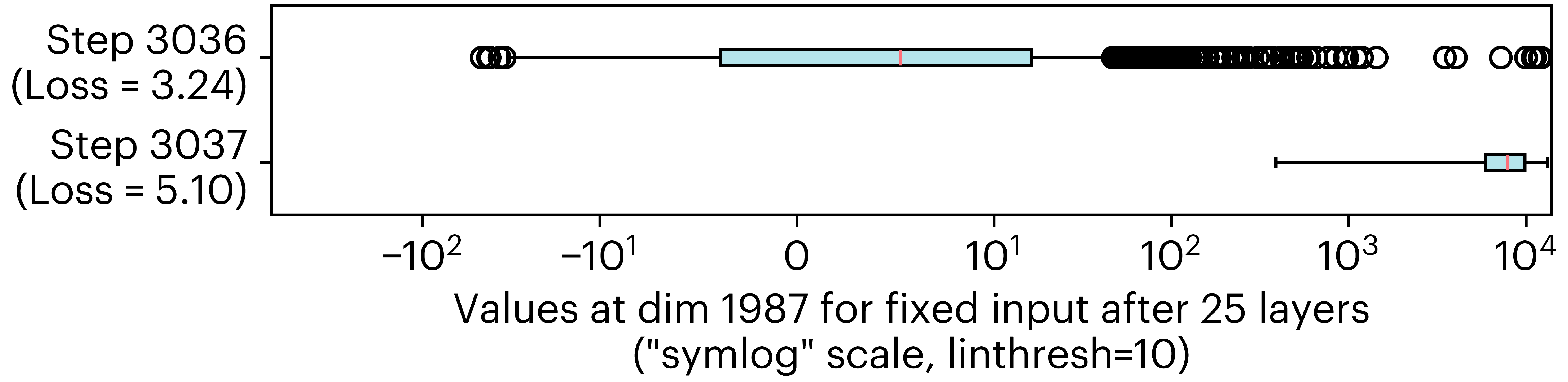}
    \caption{Example outlier feature that appears at the same step as the first loss spike in a 8B parameter training run with MXNorm using mean over block absmax to estimate RMS.}
    \label{fig:outlier-features}
\end{figure}

To understand why MXNorm(p=1) fails to stabilize pre-training, we investigate activation statistics of a 250M parameter proxy model with 16 layers and a hidden dimension of 1024 trained on 5B tokens. We also reduce MX block size to 16 in order to have the same block count as a 1B parameter model. In this setting we set the learning rate to $2^{-9}$, AdamW $\beta_2$ to 0.98, and global batch size to 128 to push training dynamics into a more unstable regime. In this setting, MXNorm(p=1) reliably has more loss spikes than MXNorm(p=2) and RMSNorm (Figure \ref{fig:spike-scores}).

Two possible causes are that (1) MXNorm(p=1) provides a bad estimate of the RMS close to loss spike causing values to explode, or (2) the larger bounds on the output of MXNorm(p=1) lead to more unstable training dynamics.

To investigate the possibility of (1) we track the RMS of the output across all layers (see Figure \ref{fig:norm-output-rms}). The envelope of the loss spike can be seen as a downward deflection of the RMS below 1.0 just before $10^9$ tokens. Before the loss spike, MXNorm(p=1) provides a highly accurate estimate of the RMS since the output RMS is close to 1.0. This suggests that providing an accurate estimate of the RMS is insufficient for stabilizing pre-training.

\begin{figure}
    \centering
    \includegraphics[width=\linewidth]{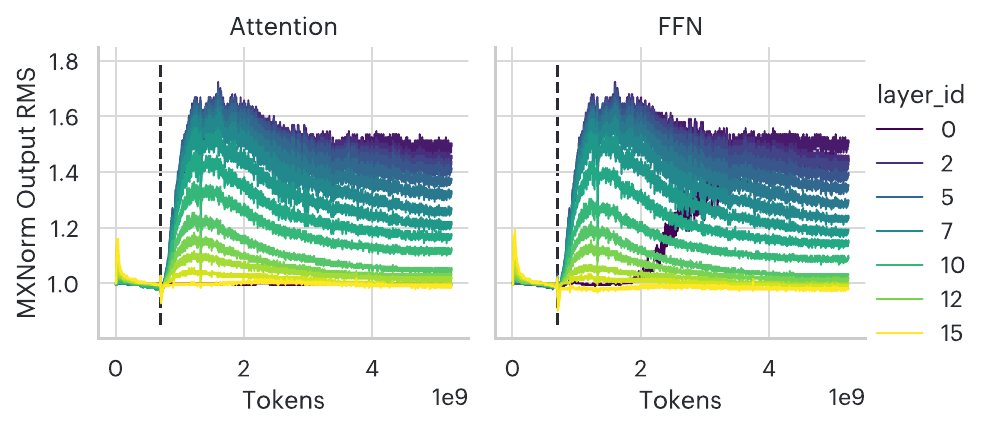}
    \caption{RMS of MXNorm outputs tracked throughout training across all 16 layers of small-scale proxy used to reproduce loss spikes. Envelope of loss spike is visible as a deflection below 1.0 just before $10^9$ tokens. Prior to the loss spike, normalization by MXNorm closely matches behaviour of RMSNorm.}
    \label{fig:norm-output-rms}
\end{figure}

To explore the possibility of (2) we examine histograms of input and output tensor elements dequantized to BF16 for each norm type at the end of training (Figure \ref{fig:norm-io-hists}). As expected the bounds on output tensors are determined by the block count $K$ for MXNorm and by the hidden dimension $D$ for RMSNorm. The bounds are $O(\sqrt{K})$ for MXNorm(p=2) ($\sqrt{K}/c$ = $\sqrt{64}$/0.4688 = 17.06) and RMSNorm ($\sqrt{D}$ = 32), but $O(K)$ for MXNorm(p=1) ($K/c$ = 64/0.4814 = 132.95). However, we also note the much larger range on the \emph{input} to MXNorm(p=1). Larger values in norm output will contribute to larger weight updates, and we see this compounding throughout training. From this we conclude that the tighter bound on the output provided by RMSNorm and MXNorm(p=2) provides a beneficial stabilizing effect on training.

\begin{figure}
    \centering
    \includegraphics[width=\linewidth]{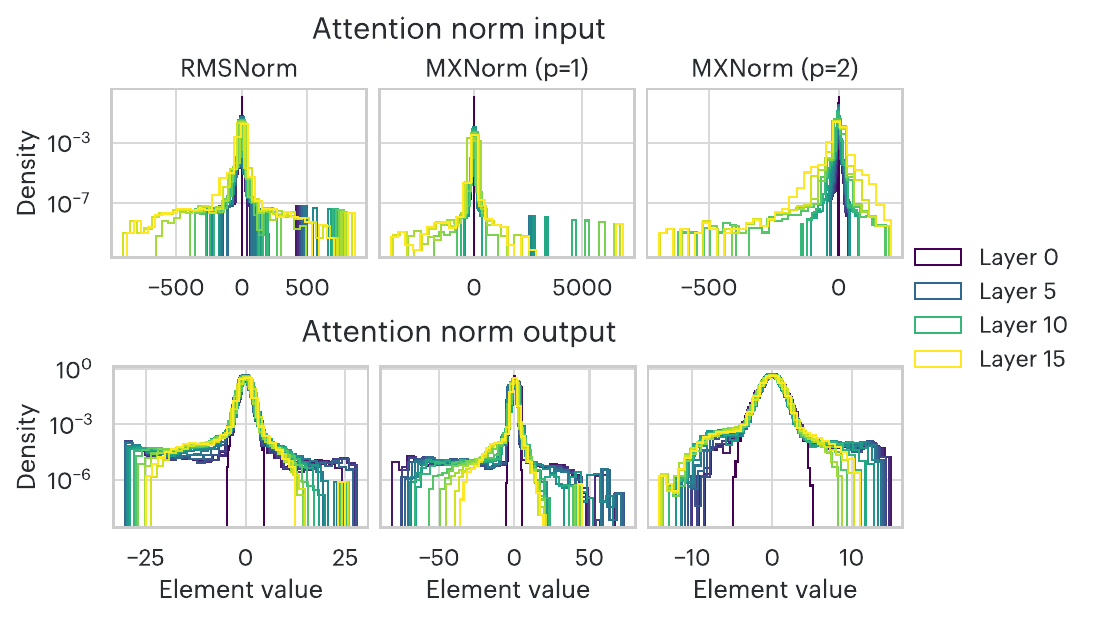}
    \caption{Histograms of input and output tensor elements for RMSNorm and MXNorm using mean or RMS over block absmaxes to compute normalization scale.}
    \label{fig:norm-io-hists}
\end{figure}

We note that the bounds of norm outputs being reached appears common, particularly in early layers. This indicates that worst-case input behaviour is common during training. Indeed, if one element in a token's hidden representation is much larger than all other elements (i.e. an outlier feature), then this closely approximates the worst case.

\subsection{Performance Analysis}

\begin{figure}[htbp]
    \centering
    \includegraphics[width=\linewidth]{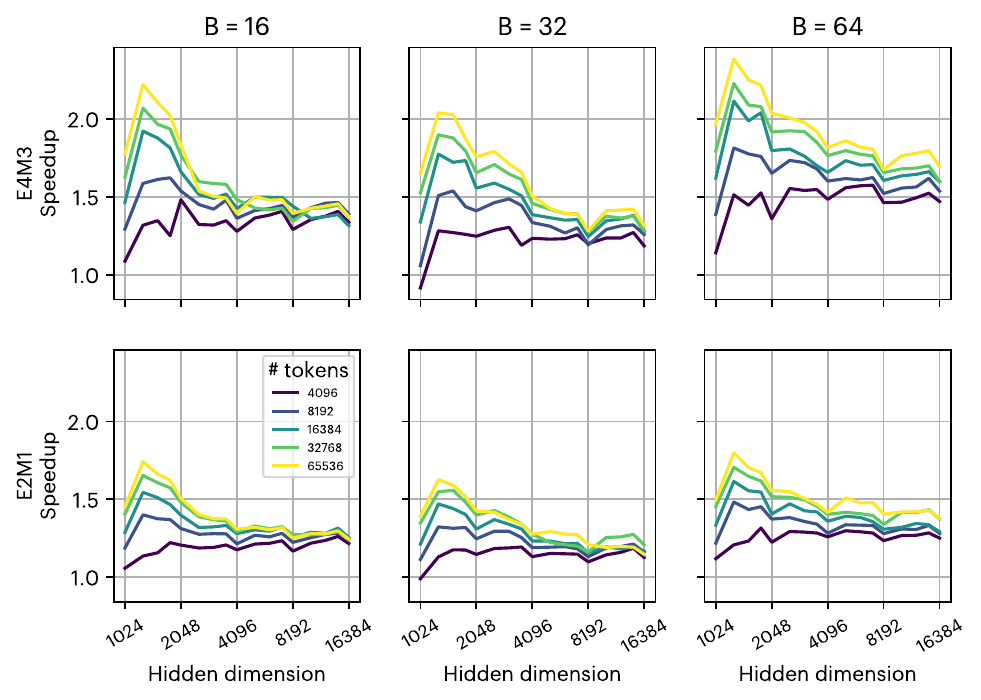}
    \caption{Relative speedup of isolated MXNorm over RMSNorm-MXCast. Normalization and casting have been fused together into a single kernel in both cases. $B$ denotes the block size.}
    \label{fig:perf-kernel}
\end{figure}

We examine the performance of an MXNorm(p=2) kernel compared with RMSNorm fused with an MXCast (Figure \ref{fig:perf-kernel}). Since MXNorm has the same gradient calculation as RMSNorm, we analyse only the forward pass. Additionally, we analyse in an inference-optimized setting in which the norm gain parameters have been fused with weights. In all cases, kernels were automatically generated with \texttt{torch.compile} on GB200 systems.

For our kernel benchmarks (Figure \ref{fig:perf-kernel}), we examine the speedup over different sizes of hidden dimension, token counts, MXFP block sizes and MXFP value datatypes. In almost all cases there is a speedup, with a maximum speedup of 2.4x. We find that the speedup generally increases with an increasing number of tokens, but converges towards a steady value with increasing hidden dimension. When taking the geometric mean to estimate the average case speedup across hidden dimension and token count (see \cite{fleming1986benchmarkstats} for why this is best practice) we found a 41.7\% speedup for the MXFP8 kernel ($B$=32, dtype=E4M3) and a 31.2\% speedup for the NVFP4 kernel ($B$=16, dtype=E2M1) on average. We examined these cases as they have dedicated support for accelerated matrix multiplication on current hardware. Furthermore, we find that these translate to geometric mean speedups of 1.3\% for transformer blocks of Llama 3 8B (hidden dimension 4096) in MXFP8 and 2.6\% in NVFP4, which is consistent with our claim that there is a greater need to optimize non-matmul operations as we move to lower-precision datatypes. Full details of these benchmarks can be found in Appendix \ref{apx:perf_experiment_details}.

\section{Conclusion} \label{sec:conclusion}

We demonstrate that RMSNorm can be replaced with MXNorm: a normalization scheme that reuses block absmaxes calculated during MXFP quantization. We show how to make minimal modifications with no extra hyperparameters to standard LLM architectures to use MXNorm in situations where normalization precedes linear layers. We find that pre-training stability correlates with the bound placed on extreme values by the normalization layer, and that MXNorm can provide similar bounding as RMSNorm as model size increases. We show that this approach preserves pre-training performance and zero-shot capability on downstream tasks for models of up to 8B parameters. We also show speedups on commercially available hardware with minimal software engineering effort.

We note that MXNorm can be used with even narrower value formats (e.g. INT2, ternary) where its advantage may be even greater and generalizes to other forms of quantization where the block absmaxes of activations are computed (e.g. VS-Quant \cite{dai2021vsquantpervectorscaledquantization}). We also note that normalization is just one of many neural network components that are not accelerated by improvements to matrix multiplication throughput in AI accelerators, such as the application of rotary positional encodings in attenton layers \cite{su2023roformerenhancedtransformerrotary} and gated linear units in feedforward layers \cite{shazeer2020gluvariantsimprovetransformer}. We leave the directions suggested by these observations to future work.

\clearpage
\printbibliography
\clearpage
\tableofcontents
\clearpage
\newpage


\appendix
\setcounter{figure}{0} 
\renewcommand{\thefigure}{A.\arabic{figure}} 

\section{Contribution and Acknowledgements} \label{sec:acknowledgements}

Paul Balan\c{c}a (PB) initially proposed the project. Callum McLean (CM) and Luke Y. Prince (LYP) developed the final MXNorm scheme. Alexandre Payot (AP) developed the majority of pre-training and evaluation infrastructure necessary for the project. CM and LYP ran experiments. CM and LYP wrote the paper. PB and Carlo Luschi supervised the project.

We thank Douglas Orr, Tom Cashman, and Andrew Fitzgibbon for useful discussions throughout the project and feedback on initial drafts of the paper. We thank Charlie Blake for insights leading to the start of the project.

\section{Compute resources} \label{apx:compute-resources}

Our 125M models are trained in 60 minutes on a single node with 8 NVIDIA H100s connected by NVLink. Our 1B models are trained in 6 hours on 4 nodes comprising 32 NVIDIA H100s connected by NVLink. Our 8B models are trained over 8 days on 8 nodes comprising 64 NVIDIA H100s connected by NVLink. We thank LambdaLabs for providing the compute for this project.

\section{MXFP Primer} \label{apx:mxfp-primer}

MX quantization chunks a tensor into contiguous blocks of a fixed size and computes a scale factor for each block, which is used to rescale the elements of the block to the range of a low-precision format. An MX tensor can therefore be thought of as a tuple of an MX scale tensor comprised of block scales and an MX values tensor of rescaled, quantized elements. Using the E8M0 format (which only represents integer powers of 2) for the scales preserves the range of BF16 while adding the minimal number of bits to the representation. The scale for each block is thus chosen to be the block's absmax rescaled to use the full range of the MX values tensor datatype (e.g., E4M3), then rounded to a power of 2 \cite{rouhani2023microscalingdataformatsdeep}. The implementation of this rounding is not fully standardized and several schemes have been proposed \cite{pytorch_ao_mx_config_2024, mishra2025recipespretrainingllmsmxfp8, or2025torchaopytorchnativetrainingtoservingmodel}. 
We illustrate the design for an example MXFP8 format in Figure \ref{fig:mx-block}. 

\begin{figure}[htbp]
    \centering
    \includegraphics[width=\linewidth]{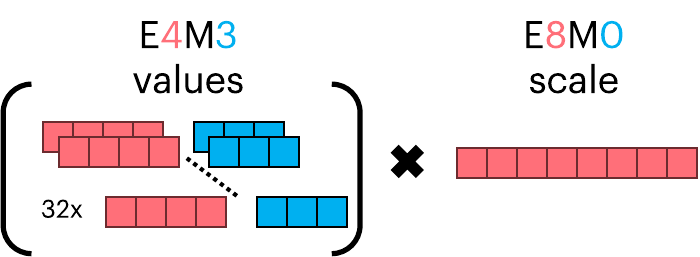}
    \caption{Representation of a block of 32 elements stored in MXFP8. $\text{E}x\text{M}y$ denotes the $x$ bits allocated to \textbf{E}xponent and $y$ bits allocated to \textbf{M}antissa. Each contiguous block of 32 elements in an MXFP8 tensor is represented by a tuple of 32 E4M3 values along with an unsigned E8M0 scale shared among the 32 elements. During dequantization, each value is multiplied by its corresponding shared scale.}
    \label{fig:mx-block}
\end{figure}

\section{Additional information for Theorem \ref{thm:mean-absmax-almost-sure-rms-ratio}} \label{apx:proof-mean-absmax-almost-sure-rms-ratio}

\subsection{The continuous-mapping theorem}
\begin{theorem}

If $(A_n) \in \mathbb{R}^\ell$ is a sequence of random variables such that $A_n \xrightarrow{\mathrm{a.s.}} A$ and $f:\mathbb{R}^\ell \rightarrow \mathbb{R}$ is continuous everywhere in some $C \subseteq \mathbb{R}^\ell$ such that $\mathbb{P}(A_n \in C) = 1$, then $f(A_n) \xrightarrow{\mathrm{a.s.}} f(A)$.

\end{theorem}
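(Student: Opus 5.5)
The plan is a pathwise (sample-by-sample) argument. Almost sure convergence is a statement about individual outcomes $\omega$, so I would reduce the claim to the deterministic fact that a function continuous at a point preserves limits of sequences converging to that point. The one subtlety is which point continuity is needed at. Deterministic continuity has to be used at the \emph{limit} $A(\omega)$, not at the terms $A_n(\omega)$. So the hypothesis that actually does the work is $\mathbb{P}(A \in C) = 1$, with $f$ continuous at every point of $C$ as a map on $\mathbb{R}^\ell$. I would state the result with that hypothesis, or with both it and $\mathbb{P}(A_n\in C)=1$ when one also wants $f(A_n)$ to sit in the region where $f$ is well behaved.

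The key steps, in order, are as follows.

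\begin{enumerate}
\item Define the event
\[
\Omega_0 \coloneqq \{\omega : A_n(\omega) \to A(\omega)\} \cap \{\omega : A(\omega) \in C\}.
\]
Each of the two events has probability one, by almost sure convergence and by the hypothesis on $A$. Hence $\mathbb{P}(\Omega_0)=1$, since the complement of $\Omega_0$ is a union of two null sets.

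\item Fix $\omega\in\Omega_0$ and let $\varepsilon>0$. Continuity of $f$ at $A(\omega)\in C$ gives a $\delta>0$ such that $\|y-A(\omega)\|<\delta$ implies $|f(y)-f(A(\omega))|<\varepsilon$. Convergence gives an $N$ with $\|A_n(\omega)-A(\omega)\|<\delta$ for all $n\ge N$. Therefore $f(A_n(\omega))\to f(A(\omega))$ for every $\omega\in\Omega_0$, which is exactly $f(A_n)\xrightarrow{\mathrm{a.s.}} f(A)$.

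\item Note for completeness that $f(A_n)$ and $f(A)$ are random variables provided $f$ is Borel measurable, which I would assume. This holds in every application in the proof of Theorem~\ref{thm:mean-absmax-almost-sure-rms-ratio}.
\end{enumerate}

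The main obstacle is the hypothesis rather than any calculation. Requiring only $\mathbb{P}(A_n\in C)=1$ is not enough. For example, take $C=\mathbb{R}\times\mathbb{R}^+$, $f(a,b)=a/b$ and $A_n=(1,1/n)$. Every term lies in $C$, but the limit $(1,0)$ does not, and $f(A_n)=n$ diverges. So the argument has to pass through the limit lying in $C$, and I would check this in each use of the theorem.

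In the main proof the three applications are $x\mapsto x^{1/2}$, $x\mapsto x^{1/p}$ and $(a,b)\mapsto a/b$. The first two are continuous on $[0,\infty)$, where all terms and limits lie. For the ratio, the limit is the deterministic point $(\sigma c^{(p,B)},\sigma)$, which lies in $\mathbb{R}\times\mathbb{R}^+$ because $\sigma>0$. The assumption $\mathbb{P}(Z=0)=0$ additionally gives $\operatorname{RMS}(X)>0$ almost surely, so each ratio is defined. With these checks the step-2 argument applies to each case directly.
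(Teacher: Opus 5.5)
Your pathwise argument is correct and is the standard proof of the almost-sure case of the continuous-mapping theorem; the paper gives no argument of its own and only cites van der Vaart. You are also right that the hypothesis must be $\mathbb{P}(A\in C)=1$, as in the cited source, rather than $\mathbb{P}(A_n\in C)=1$: your example $A_n=(1,1/n)$, $f(a,b)=a/b$, $C=\mathbb{R}\times\mathbb{R}^+$ shows the statement as written is false, and your check that the ratio step of Theorem~\ref{thm:mean-absmax-almost-sure-rms-ratio} still goes through, because the limit $(\sigma c^{(p,B)},\sigma)$ lies in $C$ when $\sigma>0$, is the correct repair.
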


\begin{proof}
See e.g. page 7 of \cite{van2000asymptotic}.
\end{proof}

\subsection{Note on the assumption that $\mathbb{P}(Z = 0) = 0$}

The assumption that $\mathbb{P}(Z = 0) = 0$ is justified if $Z$ is drawn from a continuous distribution such as a Gaussian or uniform distribution. If this distribution is quantized to floating point, this assumption may no longer hold. However, if the RMS of a vector is 0 then all of the elements must be 0, and so the generalized power norm of the block absmaxes is also 0 and hence our estimate will be accurate. We do not expect this to matter often in practical neural network training.

\section{RMSNorm gradients} \label{apx:rmsnorm-bwd}

Given an activation tensor $X \in \mathbb{R}^{N \times D}$, RMSNorm normalises each token of the tensor $X$ using the calculated RMS $S$, given by:
\begin{align}
S_i &= \sqrt{\frac{1}{D}\sum_{k=1}^D X_{ik}^2} \label{eqn:apx-rms}
\end{align}
The learnable gain parameter $\gamma \in \mathbb{R}^D$ rescales the normalised $X$ along the hidden dimension to give output $Z$ as follows:
\begin{align}
    Z_{ij} &= (X_{ij}/S_i)\gamma_j \label{eqn:apx-gain}
\end{align}

As mentioned in Section \ref{sec:mxnormlinear}, we use the gradient calculation of RMSNorm as a straight-through estimator for MXNorm. For reference, here we provide a definition of our gradient calculation. Given the gradient of the loss with respect to the linear layer output $\nabla Y$, the backwards pass for RMSNorm followed by a linear layer is given by:

\begin{align}
    (\nabla Z) &= (\nabla Y) W \\
    (\nabla W) &= (\nabla Y)^{\top} Z \\
    (\nabla \gamma)_j &= \sum_k(\bar{X}_{kj} \cdot (\nabla Z)_{kj}) \\
    (\nabla \bar{X}) &= (\nabla Z)\gamma \\
    U_{ij} &= \bigg( \sum_k(\nabla \bar{X})_{ik}X_{ik} \bigg) X_{ij}\\
    (\nabla X) &= (S^{-1})^\top \nabla \bar{X} - \frac{1}{D}(S^{-3})^\top U 
\end{align}

where we define $\bar{X}$ such that $\bar{X}_{ij} = X_{ij}/S_i$, and define $S^k$ by $(S^k)_i = (S_i)^k$.

\section{Implementation of MXNormLinear} \label{apx:impl}

We provide a PyTorch implementation for the forward and backward pass of MXNormLinear using the Pre-Norm scheme. We omit the details of \texttt{pow2\_round} since there is no standardized implementation \cite{or2025torchaopytorchnativetrainingtoservingmodel}. In our experiments we use the implementation defined by \cite{mishra2025recipespretrainingllmsmxfp8}.

\begin{lstlisting}[style=python,basicstyle=\footnotesize\ttfamily]
import torch
from torchao.prototype.mx_formats.mx_tensor import MXTensor
from math import log2, floor
from mx_norm_utils import pow2_round

def absmax_scale_factor(block_size, p):
    """
    Estimated RMS(X)/G(max(abs(X))) using Monte Carlo
    sampling from Gaussian distribution
    """
    ABSMAX_SCALE_FACTOR = {
        (16, 1): 0.4814,
        (16, 2): 0.4688,
        (32, 1): 0.4261,
        (32, 2): 0.4185,
        (64, 1): 0.3852,
        (64, 2): 0.3803
    }
    return ABSMAX_SCALE_FACTOR[(block_size, p)]

def get_largest_pow2(dtype):
    return 2 ** floor(log2(torch.finfo(dtype).max))

def mx_quantise(x, mx_data_dtype, block_size):
    x_blocked = x.reshape(*x.shape[:-1], -1, block_size)
    block_absmax = x_blocked.abs().amax(dim=-1, keepdim=True)
    largest_pow2 = get_largest_pow2(mx_data_dtype)
    mx_scales = pow2_round(block_absmax / largest_pow2)
    mx_data = x_blocked / mx_scales.unsqueeze(-1)
    mx_data = mx_data.reshape(x.shape)
    mx_data = mx_data.to(mx_data_dtype)
    mx_scales = mx_scales.to(torch.float8_e8m0fnu)
    return MXTensor(
            mx_scales, 
            mx_data, 
            mx_data_dtype, 
            block_size, 
            x.dtype
    )

def p_mean(x, p, dim=-1, keepdim=True):
    return x.pow(p).mean(dim=dim, keepdim=keepdim).pow(1/p)

def mx_norm(x, mx_data_dtype, block_size, p):
    x_blocked = x.reshape(*x.shape[:-1], -1, block_size)
    block_absmax = x_blocked.abs().amax(dim=-1, keepdim=True)

    # absmax_scale_factor returns the expected ratio 
    # of the RMS divided by the mean of the absmaxes
    coef = absmax_scale_factor(block_size, p)
    
    rms_estimate = p_mean(block_absmax, p=p) * coef
    scaled_block_absmax = block_absmax / rms_estimate
    largest_pow2 = get_largest_pow2(mx_data_dtype)
    mx_scales = pow2_round(scaled_block_absmax / largest_pow2)

    # Creating data tensor: 
    # Want mx_scale * mx_data = x / rms_estimate
    # So we want mx_data = (x / rms_estimate) / mx_scale
    # Need to cast MX scales back to match dtypes for divide
    mx_data = (x_blocked / rms_estimate) / mx_scales
    mx_data = mx_data.reshape(x.shape)
    mx_data = mx_data.to(mx_data_dtype)
    return MXTensor(
            mx_scales,
            mx_data, 
            mx_data_dtype, 
            block_size, 
            x.dtype
    )

def mx_norm_linear_forward(
        x, 
        norm_weight, 
        linear_weight,
        mx_data_dtype,
        block_size,
        p
    ):    
    mx_normalised_activations = mx_norm(
        x, mx_data_dtype, block_size, p
    )
    mx_fused_weight = mx_quantise(
        norm_weight * linear_weight, mx_data_dtype, block_size
    )
    return torch.mm(
        mx_normalised_activations, mx_fused_weight.t()
    )

def rms_norm_grad(grad_out, x, rms):
    delta = torch.mean(grad_out * x, dim=-1, keepdim=True)
    grad_x = rms.pow(-1) * grad_out - rms.pow(-3) * x * delta
    return grad_x

def mx_norm_linear_backward(
        grad_out, 
        rms_estimate, 
        x, 
        norm_weight, 
        linear_weight, 
        mx_data_dtype, 
        block_size
    ):
    # Need grad_out to be mx_quantised along both 
    # rows and columns
    mx_grad_out = mx_quantise(
        grad_out, mx_data_dtype, block_size
    )
    mx_t_grad_out = mx_quantise(
        grad_out.t(), mx_data_dtype, block_size
    )
    
    # Divide by rms_estimate from forward pass
    # in higher precision
    normed_x = x / rms_estimate
    mx_t_normed_x = mx_quantise(
        normed_x.t(), mx_data_dtype, block_size
    )
    mx_linear_weight = mx_quantise(
        linear_weight, mx_data_dtype, block_size
    )

    # Compute parameter gradients
    grad_mm_input = torch.mm(
        mx_grad_out,  mx_linear_weight.t()
    )
    grad_linear_weight = torch.mm(
        mx_t_grad_out, mx_t_normed_x.t()) * norm_weight
    grad_norm_weight = torch.sum(
        normed_x * grad_mm_input, dim=0
    )
    
    # Compute gradient w.r.t input
    # Use rms_norm_grad as a straight through 
    # estimator for mx_norm 
    grad_normed_x = grad_mm_input * norm_weight
    grad_x = rms_norm_grad(
        grad_normed_x, x, rms_estimate
    )
    return grad_x, grad_norm_weight, grad_linear_weight
\end{lstlisting}

\section{Training Experiment Details} \label{apx:experiment_details}

For all experiments we use the TorchTitan distributed pre-training library \cite{liang2025torchtitanonestoppytorchnative} with FSDP in conjunction with TorchAO \cite{or2025torchaopytorchnativetrainingtoservingmodel} for MX quantization. We used model sizes defined by configurations in Table \ref{apx:experiment_details} for all of our experiments, except where specified. We keep all other hyperparameters from TorchTitan the same.

\begin{table}[h]
\centering
 \caption{Pretraining configuration for Llama 3 models}
\begin{tabular}{lccc}
  &\multicolumn{3}{c}{Model Size}\\
      &125M&  1B& 8B\\
      \hline \\
      Global batch size&32& 
256& 256\\
 Sequence length& 4096 & 4096 & 4096\\
 Total training tokens& 2.62B & 20.97B & 314.57B\\
 Transformer block count & 8 &16 & 32\\
 Hidden dimension& 1024 & 2048 & 4096\\
 Q:KV head ratio& 4:1 & 4:1 & 4:1\\
 FFN dimension& 1536 & 3072 & 7168\\ 
 \end{tabular}
    \label{tab:configs}
\end{table}

\section{Performance Benchmark Details} \label{apx:perf_experiment_details}

\subsection{Kernel Benchmarks}

We implement two PyTorch modules, one that returns the MXFP output of MXNorm and one that normalizes the input tensor with RMSNorm and casts the normalized tensor to MXFP. We compile each module with \texttt{torch.compile} and use the facilities provided by \texttt{torch.profiler} to profile the execution on a random Gaussian input of the appropriate shape on an NVIDIA GB200. We compute the sum of the durations of the CUDA events to remove noise from the overhead of launching CUDA kernels. For each norm implementation, we profile each combination of the possibilities listed in Table \ref{tab:kernel-perf-configs} and plot the speedup.

\begin{table}[h]
\centering
 \caption{Kernel benchmark cases. All possible combinations are profiled.}
\begin{tabular}{lc}
      &Set of tested values\\
      \hline \\
      Tokens&$\{ 2^{12}, 2^{13}, 2^{14}, 2^{15}, 2^{16} \}$\\
 Hidden dim& $\{k \cdot 2^r : k \in \{1, 3, 5, 7\}, r \in \mathbb{N} \} \cap [2^{10}, 2^{14}]$\\
 Block size& $\{16, 32, 64\}$\\
 MXFP dtype & \{E4M3, E2M1\} \\
 \end{tabular}
    \label{tab:kernel-perf-configs}
\end{table}

The set of hidden dimensions profiled can also be described as the powers of 2 from $2^{10}$ to $2^{14}$, as well as three evenly spaced numbers between each pair of adjacent powers of 2.

\subsection{Full Layer Benchmarks}

For both RMSNorm and MXNorm, we profile each possible combination listed in Table \ref{tab:layer-perf-configs} (with the sequence length computed as the number of tokens divided by the batch size). 
For each combination we profile the forward pass of a Llama 3 8B model (with 8 transformer layers rather than 32) on an NVIDIA GB200 inside \texttt{torch.no\_grad()}, using \texttt{torch.compile} to compile the model and the facilities provided by \texttt{torch.profiler} to export a Chrome trace as JSON.
Linear layers in the model are performed using kernels targeting the appropriate dedicated Tensor Cores on GB200 (accessed from Python via TorchAO \cite{or2025torchaopytorchnativetrainingtoservingmodel}, with the original kernels coming from cuBLAS for MXFP8 or CUTLASS \cite{Thakkar_CUTLASS_2023} for NVFP4).

To remove noise from the overhead of launching CUDA kernels, we identify the first and last CUDA event of each transformer layer in the Chrome trace and compute the sum of the durations of the events in each transformer layer. We generate 10 profiles for each combination and compute the mean time to execute a transformer layer across all runs (discarding the first and second layers, where the embedding layer can be fused into the residual add and norm).
We compute the speedup of MXNorm over RMSNorm for each combination and report the geometric mean of speedups for each data type.

\begin{table}[h]
\centering
 \caption{Layer benchmark cases}
\begin{tabular}{lccc}
      &Set of tested values\\
      \hline \\
      Batch size&$\{4, 8\}$\\
      Tokens&$\{ 2^{12}, 2^{13}, 2^{14}, 2^{15}, 2^{16} \}$\\
 MXFP type & $\{\text{MXFP8}, \text{NVFP4}\}$ \\
 \end{tabular}
    \label{tab:layer-perf-configs}
\end{table}

\section{Convergence of Llama 3 1B with MXNorm} \label{apx:convergence}

In Section \ref{subsec:pretraining} we demonstrate the learning rate sensitivity of small Llama 3 models with RMSNorm and MXNorm. We illustrate the training loss curves to show similarity in convergence behaviour of models with the lowest training loss for each category. 

\begin{figure}[h]
    \centering
    \includegraphics[width=\linewidth]{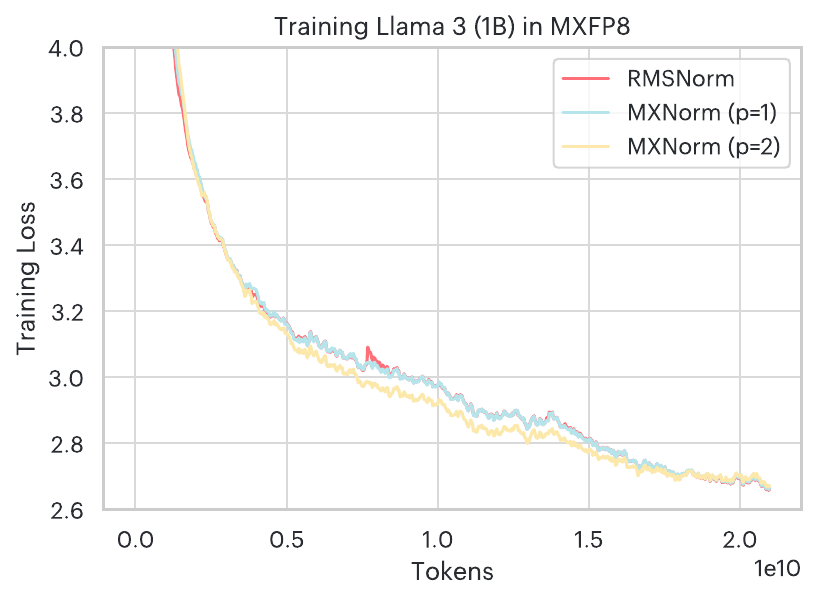}
    \caption{Training loss convergence of 1B parameter models trained on 20B tokens with MXNorm and RMSNorm.}
    \label{fig:training-loss-1b}
\end{figure}

\section{Spike score} \label{apx:spike-score}

To allow us to investigate sources and possible fixes of loss spikes we define a spike score and use this to guide the search for a proxy model that could help us understand why MXNorm(p=1) suffers from loss spikes at 8B parameter scale, but MXNorm(p=2) and RMSNorm do not. 

We define a spike score as "the area under a detrended, normalized training loss curve above 3 standard deviations", i.e., for loss $\mathcal{L}$ at step $t$

\begin{equation}
    \text{Spike Score} = \sum_t \max \left( \frac{\mathcal{L}_t - m_t}{\sigma}  - 3, 0 \right)
\end{equation}

where $m_t$ is the rolling minimum of $\mathcal{L}_t$ over 8 steps and $\sigma$ is the standard deviation in the loss observed over the last 30\% of steps.

We show the distribution of spike scores across 8 training runs for our depth=16, width=1024, MXFP block size 16 proxy model for each condition in Figure \ref{fig:spike-scores}.

\begin{figure}[htbp]
    \centering
    \includegraphics[width=
    \linewidth]{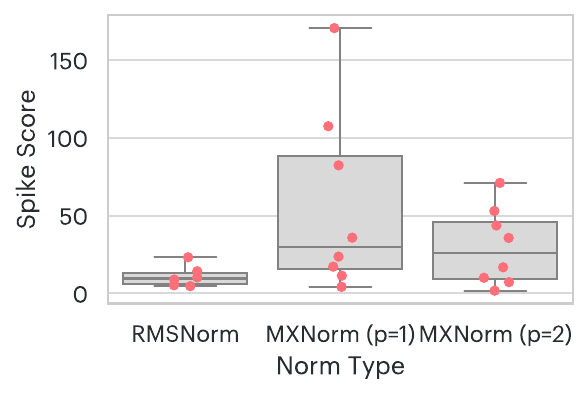}
    \caption{Boxplot of spike scores showing the mean, interquartile range, and range of values over 8 training runs. Spike score is defined as the area under a detrended loss curve above 3 standard deviations.}
    \label{fig:spike-scores}
\end{figure}

\section{Post-round MXNorm} \label{apx:post-round}

We define an ablation to our design in which we estimate the RMS of the input using the block scales after they have been rounded to E8M0 (Post-round MXNorm), and demonstrate its instability at 8B parameter scale.

\subsection{Approximating the average of the rounded scale factors}
We wish to calculate the expected value of the rounded MX scale factors assuming the tensor is drawn from a Gaussian distribution with mean zero and unknown standard deviation. Suppose
$ X_i \sim \mathcal{N}(0, \sigma^2) $
for $ 0 \le i < K $ where $K$ is the MX block size and the $X_i$'s are independent.
We define
$ r(x) := 2^{\left\lfloor log_2(x) \right\rfloor} $
for the rounded MX scale factors in E8M0 (ignoring the \texttt{rescale} operation in Equation \ref{eqn:blockscale}, which amounts to multiplication by a constant and hence can be factored out of the calculations).

We then have
$$ \mathbb{E}(\max_i(r(X_i)))
= \sum_{j=-\infty}^{\infty}2^j \cdot \mathbb{P}(\max_i(r(X_i)) = 2^j)
$$

We therefore need to compute $ \mathbb{P}(\max_i(r(X_i)) = 2^j) $. We have
$$ \mathbb{P}(\max_i(r(X_i)) = 2^j) $$
$$ = \mathbb{P}(\exists i : r(X_i) = 2^j \land \forall i : r(X_i) \le 2^j) $$
$$ = \mathbb{P}(\exists i : 2^j \le |X_i| < 2^{j+1} \land \forall i : |X_i| < 2^{j+1}) $$

Applying the law of probability that $ \mathbb{P}(A \land B) = \mathbb{P}(B) \cdot \mathbb{P}(A|B) $ gives:

$$ = \mathbb{P}(\forall i : |X_i| < 2^{j+1}) \cdot \mathbb{P}\bigg(\exists i : 2^j \le |X_i| < 2^{j+1} \bigg\rvert \forall i : |X_i| < 2^{j+1}\bigg) $$

$$ = \mathbb{P}(|X_0| < 2^{j+1})^K \cdot \bigg(1 - \mathbb{P}\bigg(\forall i : |X_i| < 2^j \biggr\rvert \forall i : |X_i| < 2^{j+1}\bigg)\bigg) $$

The simplification of the left-hand term of the product comes from the fact that the $X_i$'s are independent and identically distributed (the choice of $X_0$ is arbitrary).

Applying the law of probability that if $ C \Rightarrow D $ we have $ \mathbb{P}(C|D) = \mathbb{P}(C \cup D) / \mathbb{P}(D) = \mathbb{P}(C) / \mathbb{P}(D) $ gives:

$$ = \mathbb{P}(|X_0| < 2^{j+1})^K \cdot \bigg(1 - \frac{\mathbb{P}(|X_0| < 2^{j})^K}{\mathbb{P}(|X_0| < 2^{j+1})^N}\bigg) $$
$$ = \mathbb{P}(|X_0| < 2^j)^K - \mathbb{P}(|X_0| < 2^{j+1})^K $$

Note that if $ X_0 \sim \mathcal{N}(0, \sigma^2) $, then (using the symmetry of the Gaussian distribution):

$$ \mathbb{P}(|X_0| < x) = \mathbb{P}(X_0 < x) - \mathbb{P}(X_0 \le -x) = \mathbb{P}(X_0 < x) - (1 - \mathbb{P}(X_0 \le x)) $$
$$ = \mathbb{P}(X_0 < x) - (1 - \mathbb{P}(X_0 < x)) = 2 \cdot \mathbb{P}(X_0 < x) - 1 $$

Using the CDF of the standard normal distribution $\Phi(\cdot)$, we therefore have the following:

$$ \mathbb{E}(\max_i(r(X_i))) = \sum^{\infty}_{j=-\infty} 2^j \cdot \bigg( \bigg(2 \cdot \Phi( 2^j/\sigma) - 1\bigg)^K - \bigg(2 \cdot \Phi( 2^{j+1}/\sigma) - 1\bigg)^K \bigg) $$

As $|j|$ increases, the term in the sum rapidly decreases, so the sum can be truncated from a sum over $j \in \mathbb{N}$ to a sum over $-J < j < J$ for large $J$ with little loss in accuracy. This truncated sum can then be calculated using any mathematical software package that supports evaluating the CDF of the standard normal distribution (including PyTorch).

\subsection{Approximating the RMS from the MX tensor}

For a fixed block size $K$, we define $f: \mathbb{R}^+ \rightarrow \mathbb{R}^+$ to be given by $f(\sigma) = \mathbb{E}(\max_i(r(X_i)))$ when $X_i \sim \mathcal{N}(0, \sigma^2)$ for $0 \le i < K$ and the $X_i$'s are independent.

We observe that $f$ is strictly increasing since if the standard deviation is greater we expect the rounded scale factors to be greater, and therefore $f$ is invertible. Hence given the mean of the rounded scale factors $\bar{X}$ we can approximate the RMS of the original values as $f^{-1}(\bar{X})$. 

Since $f$ is strictly increasing, we can compute $f^{-1}(\bar{X})$ to an arbitrary degree of precision by finding $\sigma_1, \sigma_2$ such that $f(\sigma_1) < \bar{X} < f(\sigma_2)$ and then iteratively narrowing this range using a binary search.

However, this is computationally expensive to do at every layer in a model. We observe that $f(2 \sigma) = 2 f(\sigma)$ (since $r(2x) = 2r(x)$ and $\max_i|2X_i| = 2\max_i|X_i|$). Thus we can pre-compute $f^{-1}(2^{i/A})$ for $0 \le i \le A$ for some $A$ and approximate $f^{-1}$ on the interval $[1, 2]$ using linear interpolation on these pre-computed values, and approximate $f^{-1}(x)$ elsewhere using the identity $f^{-1}(x) = 2^{-k}f^{-1}(2^kx)$, where $k$ is chosen such that $2^kx \in [1, 2]$.

\subsection{Post-Round MXNorm Results}

\begin{figure}[htbp]
    \centering
    \includegraphics[width=\linewidth]{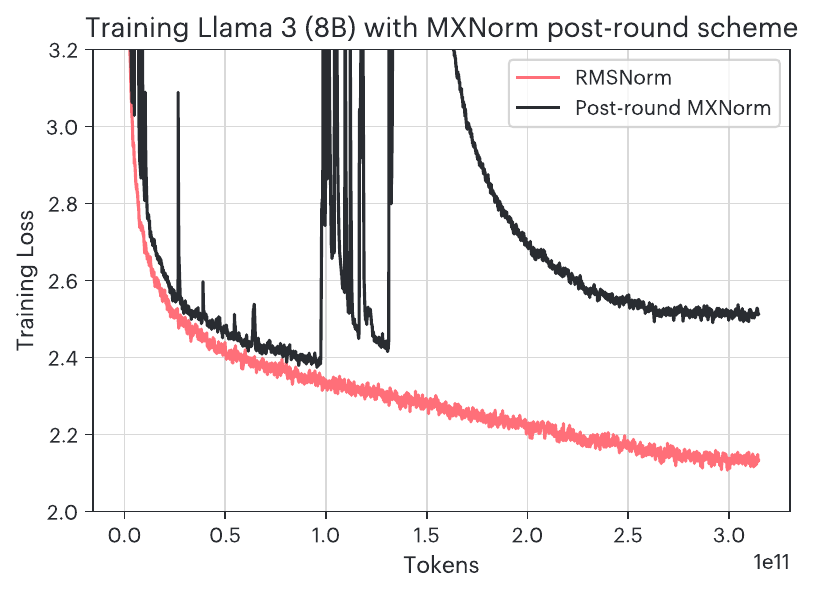}
    \caption{The post-round MXNorm scheme training loss shows instability that significantly diverges from the RMSNorm baseline.}
    \label{fig:post-round}
\end{figure}

\end{document}